\definecolor{dustygreen}{RGB}{102, 153, 102}
\newcommand{\hide}[1]{}
\newcommand{\vswd}{\vspace{0.3em}}
\newcommand{\bit}{\vswd\begin{itemize*}}
\newcommand{\eit}{\end{itemize*}\vswd}
\newcommand{\ben}{\vswd\begin{enumerate*}}
\newcommand{\een}{\end{enumerate*}\vswd}
\newcommand{\bea}{\vspace{-0.0em}\begin{eqnarray}}
\newcommand{\eea}{\end{eqnarray}\vspace{-0.0em}}
\newcommand{\beq}{\vspace{-0.0em}\begin{equation}}
\newcommand{\eeq}{\end{equation}\vspace{-0.0em}}
\renewcommand{\bit}{\vswd\begin{compactitem}}
\renewcommand{\eit}{\end{compactitem}\vswd}
\renewcommand{\ben}{\vswd\begin{compactenum}}
\renewcommand{\een}{\end{compactenum}\vswd}
\newcommand{\method}{\textsc{ExPath}\xspace}
\newcommand{\classifier}{\textsc{PathMamba}\xspace}
\newcommand{\explainer}{\textsc{PathExplainer}\xspace}
\newtheorem{lemma}{Lemma}
\newtheorem{theorem}{Theorem}
\title{Targeted Pathway Inference for Biological Knowledge Bases\\ via Graph Learning and Explanation}
\author {
    Rikuto Kotoge\textsuperscript{\rm 1},
    Ziwei Yang\textsuperscript{\rm 1,\rm 2},
    Zheng Chen\textsuperscript{\rm 1},
    Yushun Dong\textsuperscript{\rm 3},\\
    Yasuko Matsubara\textsuperscript{\rm 1},
    Jimeng Sun\textsuperscript{\rm 4},
    Yasushi Sakurai\textsuperscript{\rm 1}
}
\begin{document}

\maketitle

\begin{abstract}
  Retrieving targeted pathways in biological knowledge bases, particularly when incorporating wet-lab experimental data, remains a challenging task and often requires downstream analyses and specialized expertise.
In this paper, we frame this challenge as a solvable graph learning and explaining task and propose a novel subgraph inference framework, \method, that explicitly integrates experimental data to classify various graphs (bio-networks) in biological databases.
The links (representing pathways) that contribute more to classification can be considered as targeted pathways.
Our framework can seamlessly integrate biological foundation models to encode the experimental molecular data.
We propose ML-oriented biological evaluations and a new metric.
The experiments involving 301 bio-networks evaluations demonstrate that pathways inferred by \method are biologically meaningful, achieving up to 4.5× higher Fidelity+ 
(necessity) and 14× lower Fidelity- 
(sufficiency) than explainer baselines, while preserving signaling chains up to 4× longer.
\end{abstract}

\section{Introduction}
    \label{sec:intro}
    Decades of research have revealed that systems, from cells to organisms, can be considered biological networks \cite{Network_3}.
These networks have been compiled into public knowledge bases such as KEGG \cite{keggmapper} and STRING \cite{DataSTRING}, which document molecular (e.g., among genes or proteins) interactions and their roles in cellular functions.
While knowledge bases are continuously updated, a primary concern remains: \textit{they lack specificity for experimental data}.
The main objective of biological knowledge bases is to cover all possible interactions in a system. 
These networks are general and static.
In contrast, experimental studies focus on one specific condition or dataset, where only a subset of the network is actually relevant.
Our objective is to identify which interactions are active, meaningful, or target-specific in the given data, as shown in Figure \ref{fig:story_fig}.
In this paper, we propose to infer the bio-networks that capture targeted interactions from experimental data, thereby facilitating downstream analyses.

Many researchers have formulated this bio-network inference as a graph learning problem.
In this setting, interactions in a bio-network are modeled as graph edges, and experimental data are embedded as node features. 
Various computational and machine learning methods have been proposed to infer meaningful targeted graph structures.
Computational methods often rely on statistical node-centric metrics \cite{NACHER201657} to evaluate the importance of nodes.
Edges connected to highly ranked nodes are considered more important.
However, such objectives lack explicit inference of interactions and are computationally intractable for large bio-networks \cite{GeSubNet2024}.
Machine learning methods, particularly graph neural networks (GNNs), define explicit objectives such as link prediction or graph reconstruction, enabling direct inference of network structure \cite{GNNS2}.
Importantly, experimental data influence the learning process via node feature aggregation, making the inferred interactions more specific to the dataset.
However, existing methods are still in the early stages of exploration, and several key limitations remain unaddressed.

\begin{figure}[t]
\centering 
\includegraphics[width=\linewidth]{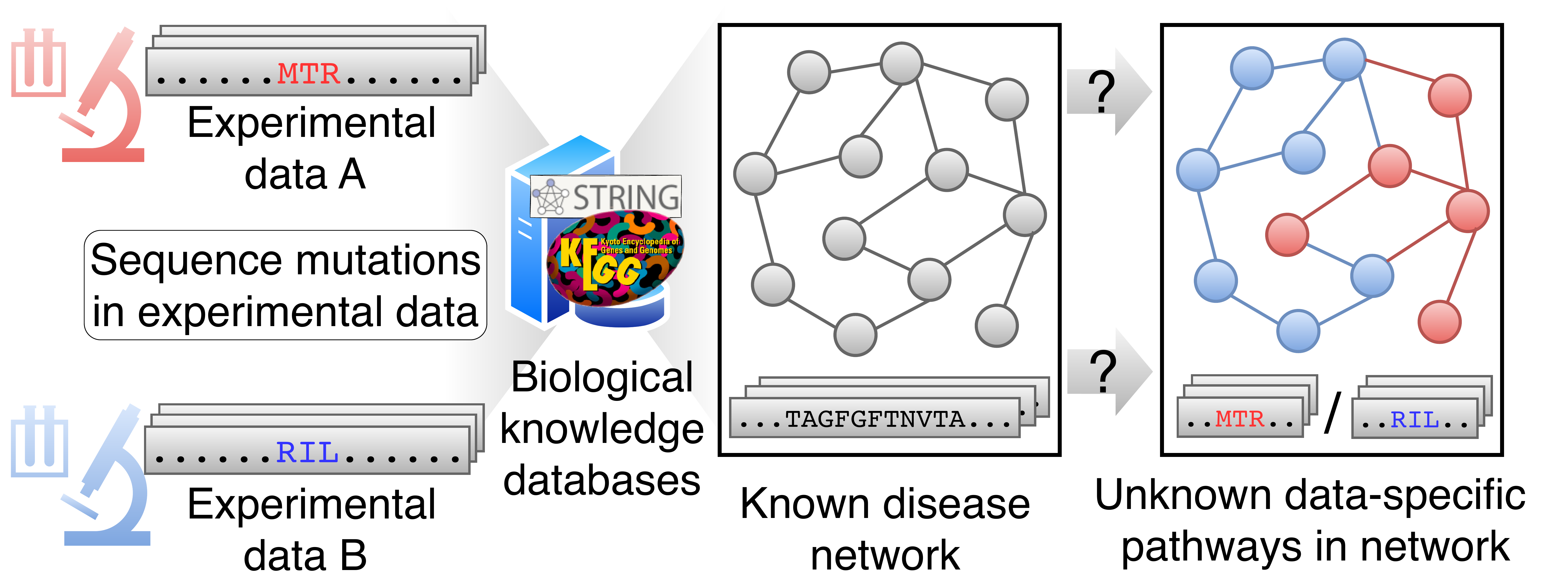}
\caption{This example illustrates two experimental datasets with different mutations (red and blue) that are mapped onto the same disease network, yet fail to reveal the distinct interactions that account for their differences.
}
\label{fig:story_fig}
\end{figure}

\begin{itemize}[left=0pt]
    \item \textbf{Implicit targeted interaction inference.}
    Their objectives aim to reconstruct the general graph structure accurately, including irrelevant interactions.
    Some works propose to gradually infer subgraph structure, weakening the influence of prior general bio-network information  \cite{graph1}.
    However, they still fail to explicitly identify the distinct interactions unique to different experimental data.
    \item \textbf{Lacking pathway modeling.} Existing works treat all interactions equally and independently, overlooking long-range dependencies in biological pathways.
    In reality, biological systems typically exhibit multi-step interactions, where one protein interaction triggers another, eventually leading to specific cellular outcomes.
    \item \textbf{Inadequate biologically plausible evaluation.}
    Existing methods typically require downstream biological analysis to qualitatively interpret the inferred interactions, which requires domain expertise.
    There is a lack of quantitative evaluation methods tailored for machine learning models.

\end{itemize}

To tackle the above challenges, we propose \method, a deep learning framework for inferring targeted data-specific pathways in bio-networks, with the following contributions:

\begin{itemize}[left=0pt]
\item \textbf{Graph explanation formulation for explicit interaction inference.}
we formulate bio-network inference as a subgraph learning and explanation task, and hence propose a graph-based model equipped with GNNExplainer.
Subgraphs, contributing most significantly to the learning objective, are explicitly identified as targeted interactions.

\item \textbf{Pathway-level encoding and explaining.}
To ensure these subgraphs capture high-order pathways, technically, we propose two novel models:
\classifier, a hybrid learning model, combines GNNs with state-space sequence modeling (Mamba) to learn both local interactions and global pathway-level dependencies;
\explainer
identifies objective-critical pathways  by learning novel pathway masks.
We also provide a theoretical analysis of \method's expressiveness and show that identified pathways capture higher-order structural patterns.

\item \textbf{A novel ML-oriented biological evaluation.}
We propose an evaluation workflow that directly incorporates model-derived subgraph importance scores to quantitatively assess their biological relevance.

\end{itemize}

\method can \textit{seamlessly integrate biological foundation models}, and in this work, we use the large protein language model, ESM-2 \cite{ESM}, as a case encoder.
We collect all available human pathway networks from KEGG \cite{KEGG2024}, resulting in 301 bio-network, using amino acid (AA) sequences as reference experimental data.
Extensive experiments demonstrate that the pathways inferred by \method are biologically meaningful, achieving up to 4.5× higher Fidelity+ 
(necessity) and 14× lower Fidelity- 
(sufficiency) than explainer baselines, while preserving signaling chains up to 4× longer.

\section{Related Work}
    \label{sec:related}
    Existing methods can be grouped into statistical topology-driven and data-driven deep graph learning methods.\\
\textbf{- Topology-driven Methods.} 
They use statistical metrics on structural properties of graphs, such as node degrees \cite{RSS,RSS1}, centrality \cite{MDS,MDS1,NACHER201657}, betweenness, or PageRank scores \cite{PMID:21149343,PPR1} to infer which substructures exhibit a more significant influence on the overall topology, thereby identifying more targeted interactions.\\
\textbf{- Deep Graph Learning Methods. } They incorporate experimental data during the learning process by embedding data as node representations.
They train GNNs with suitable objectives, such as link prediction or graph reconstruction \cite{GNNB2,GNNB1,GNNB3}, and the links that contribute most to these objectives can be considered the targeted interactions.
For instance, the works of  \cite{GraphSAGE, GCN1,GAT1} have been validated to predict protein functions within protein-protein interaction (PPI) networks.
Moreover, GNN models have been applied to incorporate RNA-Seq data, for tasks like predicting disease states and cell-cell relationships \cite{GNNS1, GNNS2}.\\
\textbf{Limitations of Previous Work.}
The topology-driven methods focus only on graph edges.
They cannot incorporate experimental data to infer biological networks.
While GNN-based methods can generate targeted interactions in a data-driven manner, their objectives do not explicitly focus on inferring networks and are typically task-specific. 
In contrast, our method focuses on directly explaining graph representations of bio-networks under specific experimental data. 
\section{Problem Formulation}
    \label{sec:preliminary}
    \begin{figure*}[t]
    \centering
    \includegraphics[width=\linewidth]{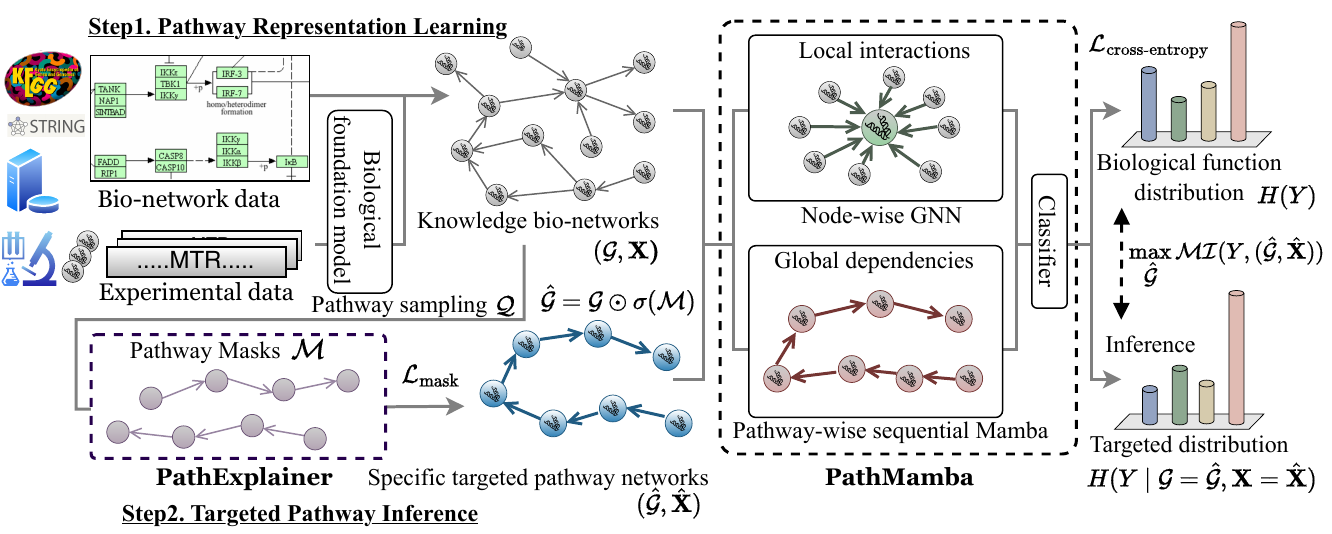}
    \caption{
    Overview of \method. Our method comprises two novel components. (1) \classifier combining graph neural networks with state-space sequence modeling (Mamba) to capture both local interactions and global pathway-level dependencies for pathway information learning; and (2) \explainer identifies functionally critical nodes and edges through trainable pathway masks for targeted pathway inference.}
    \label{fig:systemOV}
\end{figure*}

\noindent\textbf{Definition 1 (Knowledge bio-networks).}
The bio-networks can be represented as a graph $\mathcal{G} = (\mathcal{V}, \mathcal{E})$, where $\mathcal{V}$ denotes the vertices, each representing a molecule such as a gene or protein, and $\mathcal{E}$ is the set of edges, representing molecular interactions. 
Let $\mathbf{G} = {\{\mathcal{G}^{(m)}\}}_{m=1}^{M}$ denote a dataset comprising $M$ bio-networks. 
Each $\mathcal{G}^{(m)}$ is associated with a label $y^{(m)} \in \mathbf{Y}$, indicating its primary biologically functional class such as  {metabolism} or {human diseases}.

\noindent\textbf{Definition 2 (Molecular experimental data).}
For each node $v\!\in\!\mathcal{V}$, we are given a condition-specific feature vector
$\mathbf{x}_v\!\in\!\mathbb{R}^{d}$ derived from molecular experiments
(e.g., amino-acid sequence embeddings, gene-expression counts, or protein abundances
).  Collecting all nodes yields the matrix
$\mathbf{X}^{(m)}=\!\bigl[\mathbf{x}_v\bigr]_{v\in\mathcal{V}^{(m)}}$ for
network $\mathcal{G}^{(m)}$.

\noindent\textbf{Problem 1 (Bio-network inference).}
Let $\mathcal{G}=(\mathcal{V},\mathcal{E})$ be a curated graph and
 the node features $\mathbf{X}=\{\mathbf{x}_{v}\}_{v\in\mathcal{V}}$ obtained
from condition-specific \emph{experimental data}
(e.g., amino-acid embeddings).
Although $\mathcal{G}$ is static, the pair $(\mathcal{G},\mathbf{X})$
constitutes a \emph{data-specific graph} that reflects the molecular state
of the same pathway under the given experiment.
To this end, we formulate this problem as a \emph{two-stage sub-graph learning and explaining} task.
\begin{itemize}[leftmargin=*]
  \item \textbf{Task-1: Graph representation learning and classification.}\;
        Learn a classifier $F(\mathcal{G},\mathbf{X})$ that predicts the
        functional label $y\!\in\!\mathbf{Y}$ of an unseen data-specific
        graph.
  
  \item \textbf{Task-2: Targeted subgraph explanation.} Develop an explainer $E(\cdot)$ that identifies the
        smallest subgraph $\hat{\mathcal{G}}\!\subseteq\!\mathcal{G}$
        such that $F(\hat{\mathcal{G}},\mathbf{X})$ still outputs $y$.
\end{itemize}

\noindent\textbf{Problem 2 (Pathway modeling).}
Many biological functions arise from \emph{long, multi-step reaction pathways}
that span several hops in $\mathcal{G}$.
Capturing long-range dependencies is essential:
(1) for functional prediction, as perturbation effects often propagate across distant nodes; and
(2) for mechanistic insight into causal pathways beyond local interactions.
Different experimental conditions on the \emph{same} network, therefore
yield distinct, data-specific subgraphs $\hat{\mathcal{G}}$, each revealing
targeted pathway most responsible for the given data.
Hence,

\begin{itemize}[leftmargin=*]
  \item \textbf{Expectation: Pathway-level encoding and explaining.}\;
        The classifier $F(\cdot)$ and explainer $E(\cdot)$ leverage both graph topology and node features.
        $F(\cdot)$ aims to capture long-range dependencies, yielding high and class-balanced accuracy.
        Also, $E(\cdot)$ extracts subgraphs that retain biologically meaningful information of long pathways.

\end{itemize}

\section{Proposed Method}
    \label{sec:moethod}
    \subsection{Framework}

\method comprises two components: graph-based classification and post-hoc subgraph explanation, as shown in Figure \ref{fig:systemOV}. 
To tackle \underline{Task-1},
\classifier, a classifier combining GNNs with state-space sequence modeling, is to capture both local node-pair interactions and global pathway-level dependencies. 
To address \underline{Task-2}, \explainer, a graph explainer with pathway-wise masking, aims to identify the most influential subgraphs.
We explicitly integrate pathway information into both models to meet \underline{Expectation}.

Notably, our \method is compatible with large biological foundation models for encoding experimental data.
In this work, we leverage large protein language model encodings to investigate the mapping from amino acid (AA) sequences to corresponding pathway bio-networks.
Learning from AA sequence data is challenging due to its inherent complexity. 
Even slight variations can lead to significant structural changes, potentially disrupting protein functionality within pathways. 
Several studies focus on feature extraction in AA sequences, like AlphaFold \cite{AlphaFold}.

\noindent\textbf{Feedforward Process.} We first encode experimental data into node attributes using ESM-2 \cite{ESM}.
It is pre-trained on over 60 million AA sequences with parameter scaling up to 15 billion. 
We then train \classifier to learn pathway-level information and perform bio-network classification.
Finally, we apply \explainer to selectively highlight the minimal subgraphs that drive the final prediction, offering interpretable insights into key pathways.

\subsection{\classifier: Pathway Representation Learning}
\label{sec:classifier}

\classifier integrates the Graph Isomorphism Network (GIN) with a novel pathway-wise Mamba model.
It leverages the strengths of both global selective modeling mechanisms and message-passing GNNs. 
Specifically, inspired by GPS \cite{GPS}, our model avoids early-stage information loss that could arise from using GNNs in the initial layers. 
We employ novel pathway-wise global aggregation in efficient combination with random pathway sampling and sequential Mamba \cite{Mamba} modeling.  
At each layer, node and edge features are updated by aggregating the outputs of a pathway-wise Mamba as: 
\begin{eqnarray}
    X^{l+1}, &=& \texttt{PathMamba}^{l} \left( X^{l}, A \right), \\
    \textrm{computed as} \ \ \ 
    X^{l+1}_L, &=& \texttt{LocalGIN}^{l} \left( X^{l}, A \right), \\
    X^{l+1}_G, &=& \texttt{GlobalMamba}^{l} \left( X^{l}, A \right), \\
    X^{l+1}, &=& 
    \texttt{MLP}^{l}\left(X^{l+1}_L + X^{l+1}_G\right),
    \label{eqn:layer_equation}
\end{eqnarray}
where $A \in \mathbb{R}^{N \times N}$ is the adjacency matrix of a graph with $N$ nodes and $E$ edges; $X^{l} \in \mathbb{R}^{N \times d}$ represents the $d$-dimensional node features at layer $l$; $\texttt{LocalGIN}^{l}$ is a GIN; $\texttt{GlobalMamba}^{l}$ is a global pathway-wise aggregation layer; and $\texttt{MLP}^{l}$ is a two-layer multilayer perceptron (MLP) used to combine local and global features.

\subsubsection{Positional Encoding.} To address a fundamental limitation of GNNs \cite{GIN} or hybrid models \cite{GPS} to distinguish nodes with identical local structures, 
The node embedding \(\mathbf{h}_i \in \mathbb{R}^d\) and the positional encoding \(\mathbf{p}_i \in \mathbb{R}^K\) are concatenated and passed through a linear layer to obtain the final representation:
$\mathbf{x}_i = \text{Linear}([\mathbf{h}_i \| \mathbf{p}_i]),$
where \([\mathbf{h}_i \| \mathbf{p}_i] \in \mathbb{R}^{d + K}\) denotes the concatenation.

\subsubsection{Node-wise local aggregation.}
The GINs update Node features by aggregating information from their local neighbors. 
The GIN operation can be expressed as:
\begin{equation}
    X^{l+1}_L = \text{ReLU}\left( W^{l} \cdot \big( (1 + \epsilon) X^{l} + \sum_{j \in \mathcal{N}(i)} X^{l}_j \big) \right),
\end{equation}
where $\mathcal{N}(i)$ represents the set of neighbors of node $i$, $W^{l}$ is the learnable weight matrix at layer $l$, and $\epsilon$ is a trainable parameter controlling the importance of self-loops. 

\subsubsection{Pathway-wise global aggregation.}
\label{sub:global}

To capture long-term dependencies, we propose random pathway sampling and sequential pathway modeling in \classifier.

\noindent\emph{- Random Pathway Sampling. }
Formally, for each node $v_i$, we randomly sample a varied, single pathway with a maximum length of $L$. The sampling process is defined as:  
\begin{equation}
    \mathcal{Q} = \left\{ \bf{q}^i \mid \bf{q}^i \sim \text{Pathway}(v_i, L), \, |\bf{q}^i| \leq L \right\}_{i=1}^N,
\end{equation}  
where $N$ is the number of nodes in the graph, and $\bf{q}^i$ represents the sampled pathway for node $v_i$. Each pathway $\bf{q}^i$ is a sequence of nodes $\{v_i, v_{i_1}, v_{i_2}, \dots, v_{i_L}\}$, sampled according to a random walk process \cite{crawl}.  
The sampling process $\text{Pathway}(v_i, L)$ involves selecting a sequence of connected nodes starting from $v_i$.
The selection of each subsequent node is determined probabilistically, guided by the graph adjacency structure.

\noindent\emph{- Sequential Pathway Modeling. }
The forward propagation of the Mamba layer aggregates long-range dependencies along the sampled pathways. 
The selective sequential modeling of Mamba is well-suited for capturing such path information.
For each sampled pathway $\bf{q}^i \in \mathcal{Q}(X^{l})$, the Mamba layer processes the pathway sequentially as:
\begin{equation}
\begin{aligned}
\Delta_t &= \tau_\Delta(f_\Delta(\mathbf{x}_t^l)), \quad
\mathbf{B}_{t} = f_B(\mathbf{x}_t^l), \quad
\mathbf{C}_t = f_C(\mathbf{x}_t^l), \\
\mathbf{h}_t^l &= (1 - \Delta_t\cdot\mathbf{D}) \mathbf{h}_{t-1}^l + \Delta_t\cdot\mathbf{B}_t\mathbf{x}_t^l, \quad
X^{l+1}_G = C \cdot h^{l+1}_{L},
\end{aligned}
\label{eqn:mamba-updates}
\end{equation}
where $\mathbf{x}_t^l $ is the $t$-th input node feature matrix in pathway $\bf{q}^i$ at layer $l$.
$f_*$ are learnable projections and $\mathbf{h}_t^e$ is hidden state. $\tau_\Delta$ is the softplus function. 
The forgetting term 
$(1 - \Delta_t^e\cdot\mathbf{D})$
implements a selective mechanism analogous to synaptic decay or inhibitory processes that diminish outdated or irrelevant information. 
Conversely, the update term 
$\Delta_t^e\cdot\mathbf{B}_t^e$
mirrors gating that selectively reinforces and integrates salient new information. 
The projection $\mathbf{C}_t^e$ translates the internal state into observable outputs.
By processing each sampled pathway individually, the Mamba layer effectively aggregates information along each pathway. 
The aggregated pathway representations are then combined to form the updated node features $X^{l+1}_G$ for the next layer.

Afterward, we apply max pooling over the node features, i.e., $\{h_{v_i}\}_{i=1}^N$, followed by an MLP and softmax activation for the classification task.

\subsection{\explainer: Targeted Pathway Inference}\label{sec:explainer}
\explainer directly infers subgraphs to generate targeted pathways by leveraging the interpretability of \classifier. 
Vallina GNNexplainers \cite{gnnexplainer, pgexplainer}, which focus primarily on the node or edge level, often struggle to capture the global structures at the pathway level. 
In contrast, \explainer introduces \textbf{novel pathway mask training}, where entire pathways (i.e., connected nodes and edges) are selectively masked during training to evaluate their contributions to \classifier.

\subsubsection{Theoretical Objective.}
\explainer formalizes the identification of important subgraphs as an optimization problem. 
For a given graph \( \mathcal{G}\) and its features \( \mathbf{X} \), the explanation is defined as \( (\hat{\mathcal{G}}, \hat{\mathbf{X}}) \), where \( \hat{\mathcal{G}} \subseteq \mathcal{G} \) is the subgraph and \( \hat{\mathbf{X}} \) represents the selected features. 
The explanation is derived by optimizing the mutual information $\mathcal{MI}(\cdot)$ between the subgraph and the model's prediction, aiming to identify \( \hat{\mathcal{G}} \) that captures the predictive rationale of \classifier:
\begin{equation}
\max_{\hat{\mathcal{G}}} \mathcal{MI}(Y, (\hat{\mathcal{G}}, \hat{\mathbf{X}})) = H(Y) - H(Y \mid \mathcal{G} = \hat{\mathcal{G}}, \mathbf{X} = \hat{\mathbf{X}}),
\end{equation}
where \( H(Y) \) is the entropy of the predictions $Y$ and \( H(Y \mid \mathcal{G} = \hat{\mathcal{G}}, \mathbf{X} = \hat{\mathbf{X}}) \) is the conditional entropy given the explanation.
A lower conditional entropy indicates a more faithful and informative representation of the prediction.

\subsubsection{Optimization Framework.}
The optimization is approached by learning a pathway mask \( \mathcal{M} \) for the sampled pathway's edges and nodes based on random pathways \( \mathcal{Q} \) as described in Section~\ref{sub:global}. 
For each node \( v_i \), a random pathway \( q_i \) of length up to \( L \) is sampled. These pathways are then used to restrict the mask learning process within the sampled pathways, ensuring that the learnable pathway mask \( \mathcal{M} \) focuses on them. 
Specifically, the targeted subgraph \( \hat{\mathcal{G}}  \) is inferred based on  \( \mathbf{M} \) as: \( \hat{\mathcal{G}} = \mathcal{G} \odot \sigma(\mathcal{M}) \), where \( \sigma \) denotes the sigmoid function. 
The loss function for \explainer combines two components: a cross-entropy term for prediction consistency and regularization terms for sparsity:
\begin{equation}
\resizebox{1.0\linewidth}{!}{
    $\begin{aligned}
    \mathcal{L}_\text{mask} := -\sum_{c=1}^{C} \mathds{1}[y = c] \log P_\Phi(Y = y \mid \mathcal{G} = \hat{\mathcal{G}}, \mathbf{X} = \hat{\mathbf{X}} ) 
    + \lambda \|\mathcal{M}\|
    \end{aligned}$
},
\end{equation}
where \( \|M\| \) encourages sparsity in the edge selection, $Y$ is a random variable representing labels \{1, 2, \ldots, C\}, and $\lambda$ balances the trade-off between the prediction consistency and the sparsity regularization.
Hence, the identified important subgraphs and node features that contribute most to specific bio-networks are considered as targeted pathways.

\begin{table*}[t]
\caption{Baseline comparison results on bio-network classification. The best and second-best results are highlighted in \textbf{bold} and \underline{underline}, respectively. The gray-shaded rows indicate \classifier with different ESM-2 (encoder) parameter settings. }
\vspace{-0.4em} 
\label{tab:classification_result}
\begin{center}
\resizebox{\textwidth}{!}{
\begin{tabular}{l|cc|cc|cc|cc|c}
\toprule
       & \multicolumn{2}{c|}{Human Diseases} & \multicolumn{2}{c|}{Metabolism} & \multicolumn{2}{c|}{Organismal Systems} & \multicolumn{2}{c|}{Molecular \& Cellular Processes} &   Overall      \\ 
Methods         & Precision   & Recall                & Precision   & Recall            & Precision   & Recall                    & Precision   & Recall                                  & Accuracy     \\ 
\midrule
GCN               & 0.632 ± 0.013 & 0.669 ± 0.022           & 0.895 ± 0.009 & 0.958 ± 0.007       & 0.644 ± 0.037 & 0.630 ± 0.023               & 0.570 ± 0.033 & 0.357 ± 0.025                             & 0.683 ± 0.056  \\
GraphSAGE         & 0.583 ± 0.020 & 0.633 ± 0.072           & 0.890 ± 0.007 & 0.959 ± 0.014       & 0.553 ± 0.041 & 0.575 ± 0.031               & 0.526 ± 0.059 & 0.337 ± 0.062                             & 0.632 ± 0.037  \\
GAT               & 0.630 ± 0.015 & 0.643 ± 0.036           & \textbf{0.932 ± 0.017} & \underline{0.970 ± 0.008}       & \underline{0.659 ± 0.015} & \textbf{0.703 ± 0.010}               & 0.560 ± 0.058 & 0.370 ± 0.025                             & 0.690 ± 0.018  \\
GIN               & 0.688 ± 0.023 & 0.697 ± 0.014           & 0.912 ± 0.016 & 0.944 ± 0.022       & 0.629 ± 0.025 & 0.638 ± 0.041               & 0.606 ± 0.032 & \underline{0.497 ± 0.027}                             & 0.717 ± 0.013  \\
GPS & \underline{0.744 ± 0.018} & \underline{0.729 ± 0.024}           & 0.893 ± 0.006 & 0.955 ± 0.014       & 0.634 ± 0.026 & 0.658 ± 0.011               & 0.629 ± 0.060 & \textbf{0.507 ± 0.019}                             & \underline{0.726 ± 0.014}  \\
Graph-Mamba        & 0.707 ± 0.024 & 0.712 ± 0.024           & 0.897 ± 0.009 & 0.967 ± 0.007       & 0.626 ± 0.021 & 0.663 ± 0.033               & \textbf{0.700 ± 0.021} & 0.463 ± 0.032                             & 0.723 ± 0.014  \\
\midrule
\rowcolor{gray!10}
\classifier         & \textbf{0.786 ± 0.029} & \textbf{0.800 ± 0.033}           & \underline{0.915 ± 0.011} & \textbf{0.972 ± 0.005}       & \textbf{0.670 ± 0.026} & \textbf{0.703 ± 0.010}               & \underline{0.667 ± 0.035} & \underline{0.497 ± 0.028}                             & \textbf{0.744 ± 0.015}  \\
\rowcolor{gray!10} w/ 3B         & 0.752 ± 0.022 & 0.726 ± 0.027           & 0.917 ± 0.008 & 0.973 ± 0.010       & 0.661 ± 0.017 & 0.663 ± 0.023               & 0.656 ± 0.032 & 0.550 ± 0.042                             & 0.742 ± 0.009  \\
\rowcolor{gray!10} w/ 150M         & 0.764 ± 0.031 & 0.764 ± 0.011           & 0.906 ± 0.011 & 0.975 ± 0.013       & 0.639 ± 0.023 & 0.688 ± 0.025               & 0.653 ± 0.029 & 0.510 ± 0.030                             & 0.728 ± 0.013  \\
\rowcolor{gray!10} w/ 35M         & 0.748 ± 0.033 & 0.751 ± 0.019           & 0.914 ± 0.005 & 0.969 ± 0.007       & 0.634 ± 0.028 & 0.663 ± 0.028               & 0.633 ± 0.055 & 0.510 ± 0.049                             & 0.722 ± 0.013  \\
\rowcolor{gray!10} w/o ESM-2         & 0.380 ± 0.008 & 0.585 ± 0.015           & 0.669 ± 0.015 & 0.585 ± 0.015       & 0.241 ± 0.007 & 0.063 ± 0.019               & 0.378 ± 0.030 & 0.377 ± 0.043                             & 0.440 ± 0.010  \\
\bottomrule
\end{tabular}
}
\end{center}
\end{table*}

\subsection{Theoretical Analysis for Targeted Pathway Fidelity}
In this section, we place our method within the Weisfeiler–Lehman (WL) hierarchy to characterize its expressive power (Details can be found in Appendix \ref{appendix:tools}).
By proving that our explainer goes beyond the 1-WL limitation, we ensure that the extracted pathways capture higher-order structural patterns, establishing a theoretical upper bound on fidelity and inference
,supporting the empirical results.

\begin{restatable}{lemma}{exprforexpl}(Expressiveness for explanations).
\label{lmm:expr_for_expl}
When combined with higher expressive models
(\emph{e.g.}, it distinguishes more graphs), 
\explainer can generate more finely differentiated (and potentially 
more ``faithful'') explanation pathways (subgraphs). 
In contrast, a less expressive models
merges different graphs into larger equivalence classes, 
leading to non-unique, less granular explanations.
\end{restatable}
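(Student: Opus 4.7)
The plan is to cast the claim in the language of Weisfeiler--Lehman (WL) equivalence classes and then transfer the refinement relation from the classifier $F$ to the explainer $E$. I first fix notation: for a model $F$, let $\sim_F$ denote the equivalence relation on $(\mathcal{G},\mathbf{X})$ pairs defined by $G_1 \sim_F G_2 \iff F(G_1) = F(G_2)$ (equality of the learned representations, not just the final label). A model $F'$ is \emph{more expressive} than $F$ if $\sim_{F'}$ is a strict refinement of $\sim_F$, i.e., $F'$ separates every pair $F$ does, plus at least one more. The standard result I will cite from the appendix is that message-passing GNNs are upper-bounded by $1$-WL, whereas \classifier augments GIN with pathway-wise Mamba aggregation over sampled walks, which can distinguish non-isomorphic graphs that $1$-WL cannot (any walk-based aggregator of length $L \ge 2$ sees multiset patterns beyond $1$-WL color refinement). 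This is the first step I would make precise.

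Next I would link $E$'s output to $\sim_F$ through the mutual-information objective. The explainer solves $\hat{\mathcal{G}} = \arg\max_{\hat{\mathcal{G}}\subseteq\mathcal{G}} \mathcal{MI}(Y,(\hat{\mathcal{G}},\hat{\mathbf{X}}))$, and since $Y = F(\mathcal{G},\mathbf{X})$, the conditional term $H(Y \mid \hat{\mathcal{G}},\hat{\mathbf{X}})$ is fully determined by the equivalence class $[\hat{\mathcal{G}}]_{\sim_F}$. The key lemma-in-proof I would establish is: if $\hat{\mathcal{G}}_1 \sim_F \hat{\mathcal{G}}_2$, then they yield identical MI, so $E$ cannot prefer one over the other, and any tie-breaking rule produces a non-unique or coarsened explanation. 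Conversely, if $F$ separates two candidate subgraphs, then their MI values generically differ, and $E$ can (and will, under the sparsity regularizer $\lambda\|\mathcal{M}\|$) select the more faithful one. This monotone transfer $\sim_{F'} \sqsubseteq \sim_F \Rightarrow \sim_{E'} \sqsubseteq \sim_E$ is the structural heart of the claim.

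Given that backbone, I would conclude in two short steps. First, instantiate the above with $F = $ a $1$-WL-bounded GNN versus $F' = $ \classifier: on any pair of $1$-WL-indistinguishable but walk-distinguishable graphs, the GNN-based explainer must assign identical (hence non-unique) importance masks, while \explainer paired with \classifier can separate their pathway masks. Second, note that because the pathway mask $\mathcal{M}$ is defined on sampled walks of length up to $L$, the refinement granularity strictly grows with $L$, matching the empirical claim that longer signaling chains are preserved. This gives the theoretical upper bound on fidelity referenced in the empirical section.

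The hard part will be the uniqueness/granularity argument: strictly speaking, equal MI does not force the $\arg\max$ set to collapse unless one accounts for the sparsity regularizer and the random-walk sampling distribution. I would handle this by replacing ``non-unique'' with a quantitative statement --- the set of MI-optimal subgraphs contains, for each $\sim_F$-equivalence class, the full preimage under the mask parametrization --- and by arguing that the expected mask under the random-walk sampling is constant on each class. A secondary subtlety is that Mamba's selective state updates are input-dependent and therefore not strictly permutation-equivariant in the GNN sense; I would address this by averaging over the random-walk sampler to recover a well-defined equivalence relation, pointing the reader to the appendix for the full WL-hierarchy placement.
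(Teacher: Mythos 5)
Your plan follows essentially the same route as the paper's proof in Appendix B: both define the equivalence relation induced by the classifier's outputs, observe that the explainer's objective (your mutual-information form, the paper's divergence-plus-sparsity loss $\alpha D + \beta\Omega$) is constant on each equivalence class, and conclude that coarser classes force non-unique, less granular explanations while a more expressive classifier refines them. The additional material you sketch --- the explicit 1-WL versus \classifier instantiation, and the caveat that equal objective values alone do not collapse the optimizer set --- is deferred by the paper to Lemmas~\ref{lmm:k-wl}--\ref{lmm:1-wl} and left informal there, respectively, so your plan matches rather than diverges from the paper's argument.
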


In Appendix \ref{app:proof}, we prove this by showing that 
the expressiveness of the underlying a graph classifier
$f$ determines the granularity of equivalence classes, with more expressive models enabling finer distinctions between graphs.

\begin{lemma}[Comparison with k-WL test]
\label{lmm:k-wl}
For every $k\ge 1$ there are graphs that are distinguishable
    by \classifier, but not
    by $k$-WL (and hence not by $k$-WL GNNs).
\end{lemma}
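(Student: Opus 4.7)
The plan is to combine two ingredients: (i) the classical limitations of the $k$-WL hierarchy and (ii) the fact that the pathway-wise Mamba module encodes ordered sequences of node identities and features along walks of length up to $L$, a computation that sits outside the WL framework. First, I would recall that for every $k\ge 1$ there exist non-isomorphic graphs (e.g., the Cai--F\"urer--Immerman graphs $\mathrm{CFI}_k$) that are indistinguishable by the $k$-WL test, and hence also by any $k$-WL--bounded GNN. These pairs furnish a canonical obstruction that \classifier must be shown to overcome.

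Second, I would argue that \classifier's global channel processes each sampled pathway $\mathbf{q}^i=(v_i,v_{i_1},\ldots,v_{i_L})$ through a selective SSM whose hidden state at step $t$ depends on the ordered input sequence rather than on multisets of neighbors. Concretely, unrolling the Mamba update in Equation~\eqref{eqn:mamba-updates}, the output at the end of a walk is a function of the full tuple $(\mathbf{x}_{t}^{l})_{t=0}^{L}$; thus features such as the vertex-repetition pattern along the walk, the presence of certain closed walks of bounded length, and long-range conditioning between distant nodes on the same walk are all accessible. These are exactly the kinds of global structural invariants that $k$-WL provably fails to compute.

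Third, I would instantiate a concrete witness. Given $k$, I would pick $G_1,G_2\in \mathrm{CFI}_k$ non-isomorphic and $k$-WL-equivalent, choose $L$ larger than the relevant CFI gadget diameter, and exhibit a parameter setting of $f_\Delta,f_B,f_C$ together with the GIN branch that makes the pooled representation distinguish the two graphs. For this step I would invoke Lemma~\ref{lmm:expr_for_expl} as the bridge: since \classifier can realize a walk-indicator functional that separates $G_1$ from $G_2$, and the max-pooling plus MLP head preserves that separation, the resulting classifier outputs differ on $(G_1,\mathbf{X})$ and $(G_2,\mathbf{X})$ for an appropriate choice of (shared) node features.

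The main obstacle I anticipate is controlling the stochasticity introduced by random pathway sampling, since two independent runs on the same graph may yield different walks. I would handle this by moving to the expected embedding over the sampler's walk distribution and arguing that $\mathbb{E}[X^{l+1}_G\mid G_1]\neq \mathbb{E}[X^{l+1}_G\mid G_2]$ because the two walk distributions, while matching on local multiset statistics, differ on higher-order sequential moments that the Mamba recurrence can linearly read out. An alternative, if the expectation argument proves delicate, is to sample a sufficiently large collection of walks per node so that the empirical path distribution concentrates, and then invoke a standard separation argument between distinct measures on the space of length-$L$ walks; either route yields the claimed strict separation from $k$-WL.
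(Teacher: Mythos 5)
The paper itself does not prove this lemma from first principles: its entire proof is a one-line citation to the random-walk expressiveness results of CRaWl and Graph-Mamba, which establish the $k$-WL separation for architectures that process randomly sampled walks with a 1-d CNN or an SSM. Your proposal is therefore not ``the same approach'' so much as an attempt to reconstruct the cited argument in full --- a more ambitious and more informative route --- but as written it has two genuine gaps.

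First, the pivotal step, namely that \classifier's global channel can separate a $k$-WL-equivalent Cai--F\"urer--Immerman pair, silently assumes that the sequence fed to the Mamba recurrence encodes which walk positions revisit the same vertex and which positions are adjacent in $\mathcal{G}$. In CRaWl this is exactly the role of the explicit identity- and adjacency-encodings attached to each walklet window, and the $k$-WL separation proof leans on them: CFI graphs are regular with locally interchangeable vertices, so the bare sequence of node feature vectors $(\mathbf{x}_t^{l})_{t=0}^{L}$ along a random walk can have the same distribution on both graphs, and then no choice of $f_\Delta, f_B, f_C$ in Equation~\eqref{eqn:mamba-updates} separates them. You would need either to verify that the positional encodings $\mathbf{p}_i$ concatenated by \classifier supply an adequate surrogate for those identity/adjacency encodings, or to add them to the walk features explicitly. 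Second, your closing step asserts that the two walk distributions ``differ on higher-order sequential moments''; for CFI graphs this is precisely the non-trivial content of the theorem and cannot simply be asserted --- it is what the window-adjacency argument in the cited works actually proves. Relatedly, invoking Lemma~\ref{lmm:expr_for_expl} as the bridge is misplaced: that lemma concerns the granularity of explanations induced by a classifier of fixed expressive power and contributes nothing to constructing a classifier that separates $G_1$ from $G_2$. If you repair the first gap by importing the walklet encodings, the remainder of your outline --- including the expectation/concentration treatment of sampling randomness, which the paper never addresses at all --- is sound and would make the lemma self-contained rather than delegated to a citation.
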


\begin{proof}
The proof of this theorem directly comes from the recent work \cite{crawl, graphmamba_kdd}. They prove a similar
theorem using 1-d CNNs \cite{crawl} or SSM \cite{graphmamba_kdd} with randomly sampled subgraphs.
Since our method adopts Mamba (an SSM architecture) combined with the random sampling strategy, their theoretical results are directly applicable to our setting.
\end{proof}

\begin{lemma}[Comparison with 1-WL test]
\label{lmm:1-wl}
\classifier is strictly more expressive than 1-WL GNNs.
\end{lemma}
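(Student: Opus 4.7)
The plan is to establish strict containment by proving two sub-claims: (i) every pair of graphs separated by 1-WL can also be separated by \classifier, and (ii) there exists at least one pair of graphs separated by \classifier but indistinguishable under 1-WL. Together these yield strictly greater expressive power.

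For (i), I would exploit the fact that each \classifier layer contains a \texttt{LocalGIN} branch, and by the classical GIN result \cite{GIN}, a GIN with a sufficiently expressive MLP realizes 1-WL. The combination rule $X^{l+1}=\texttt{MLP}^{l}(X^{l+1}_L+X^{l+1}_G)$ in Equation~(4) is a universal approximator of its sum input, so by choosing the Mamba parameters in Equation~(7) such that the global branch outputs a constant contribution (for instance, by driving the selective projections $\mathbf{B}_t$ and $\mathbf{C}_t$ toward zero), the combined layer reduces to an injective function of the local GIN output. Consequently every 1-WL colour-refinement computable by a GIN is realizable as a special parameterisation of \classifier, giving the containment direction.

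For (ii), I would directly invoke Lemma~\ref{lmm:k-wl} with $k=1$, which already exhibits a graph pair that \classifier distinguishes but 1-WL (and hence any 1-WL GNN) does not. Combining this existence result with the containment in (i) yields the strict inequality claimed.

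The hard part will be making step (i) fully rigorous: a priori, the additive combination of the local and global branches could interfere destructively and erase distinguishing information at some intermediate layer. The main obstacle is therefore to argue carefully that MLP universality together with the parameter freedom of the pathway-wise Mamba branch suffices to preserve, at minimum, the injective mapping learned by the GIN sub-network, so that no 1-WL separating power is lost in the overall architecture. A clean way to handle this is to exhibit an explicit parameter assignment (rather than relying on a non-constructive universality argument), thereby reducing (i) to a direct simulation of a GIN by \classifier.
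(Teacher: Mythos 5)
Your proposal follows essentially the same two-step route as the paper's proof: containment of 1-WL expressive power via the \texttt{LocalGIN} sub-module, and strictness via Lemma~\ref{lmm:k-wl}. The only difference is that you treat the containment step more carefully than the paper does --- the paper simply asserts that containing GIN as a sub-module makes \classifier at least as expressive as 1-WL, whereas you correctly flag that the additive global branch in $X^{l+1}=\texttt{MLP}^{l}(X^{l+1}_L+X^{l+1}_G)$ could in principle interfere with the GIN's injective colouring and propose an explicit parameterisation (neutralising the Mamba contribution) to rule this out; this is a strengthening of the paper's argument rather than a divergence from it.
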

\begin{proof}
We first note that \classifier\ contains the GIN as a sub-module, which has the same expressive power as the 1-WL test \cite{GIN}.
Therefore, \classifier\ is at least as expressive as 1-WL GNNs.
By Lemma~\ref{lmm:k-wl}, there are graphs that cannot be
distinguished by $1$-WL GNNs, but can be distinguished by
\classifier. Consequently, \classifier\ is strictly more expressive than 1-WL
GNNs.
\end{proof}

\begin{theorem}[Explanations of \method]
\label{thm}
Based on Lemma \ref{lmm:expr_for_expl}, \ref{lmm:k-wl}, and \ref{lmm:1-wl}, \method can generate more finely differentiated (and potentially 
more ``faithful'') explanation pathways (subgraphs) than 1-WL GNN-based methods, and not bounded by any WL GNN methods.
\end{theorem}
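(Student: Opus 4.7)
The plan is to prove Theorem~\ref{thm} by directly assembling the three preceding lemmas, since the statement is a synthesis result rather than a new technical claim. First I would invoke Lemma~\ref{lmm:expr_for_expl} to fix the key reduction: the granularity of the pathway explanations produced by \explainer is governed by the expressive power of the underlying classifier, in the sense that if a classifier separates strictly more pairs of graphs, then the induced explanation map partitions graphs into strictly finer equivalence classes of subgraphs. This reduces the claim about explanations to a claim about the distinguishing power of \classifier.

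Next I would combine Lemma~\ref{lmm:1-wl} with this reduction to obtain the first half of the theorem. Lemma~\ref{lmm:1-wl} gives strictly more expressive power than any 1-WL GNN, so there exist graph pairs that \classifier separates while every 1-WL GNN collapses. Feeding this strict inclusion into Lemma~\ref{lmm:expr_for_expl} yields that \method's explanations are at least as fine-grained on all inputs as those of any 1-WL GNN-based explainer, and strictly finer on some inputs, which is exactly the ``more finely differentiated and potentially more faithful'' conclusion.

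For the second half, the unboundedness in the WL hierarchy, I would apply Lemma~\ref{lmm:k-wl}: for every fixed $k\ge 1$ there exist graphs that \classifier distinguishes but no $k$-WL GNN can. Composing with Lemma~\ref{lmm:expr_for_expl} as before, for each such $k$ there are inputs on which \method's explanation subgraphs are strictly more granular than those of any $k$-WL GNN explainer. Since $k$ is arbitrary, no level of the WL hierarchy upper-bounds \method, which gives the ``not bounded by any WL GNN methods'' clause.

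The main (minor) obstacle is conceptual rather than computational: Lemmas~\ref{lmm:k-wl} and~\ref{lmm:1-wl} are stated at the level of classifiers and their distinguishing power, whereas Theorem~\ref{thm} is stated at the level of explanations. The step that needs care is the translation from ``classifier separates $\mathcal{G}_1$ and $\mathcal{G}_2$'' to ``explainer produces inequivalent pathway subgraphs for $\mathcal{G}_1$ and $\mathcal{G}_2$.'' Lemma~\ref{lmm:expr_for_expl} supplies this bridge, but I would make the argument explicit by noting that whenever $F(\mathcal{G}_1,\mathbf{X}_1)\neq F(\mathcal{G}_2,\mathbf{X}_2)$ the mutual-information objective maximized by \explainer admits different optimal masks $\mathcal{M}_1,\mathcal{M}_2$ on the two inputs, so the induced explanations cannot be forced into the same equivalence class. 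Everything else then follows by direct composition of the three lemmas.
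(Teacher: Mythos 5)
Your proposal is correct and follows essentially the same route as the paper: the theorem is obtained by composing Lemma~\ref{lmm:expr_for_expl} (which ties explanation granularity to classifier expressiveness) with Lemmas~\ref{lmm:1-wl} and~\ref{lmm:k-wl} on the distinguishing power of \classifier, exactly as you do. Your explicit remark on bridging ``classifier separates two graphs'' to ``explainer yields inequivalent subgraphs'' matches the argument the paper gives in its appendix proof of Lemma~\ref{lmm:expr_for_expl}.
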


\section{Experiments and Results}
    \label{sec:exp}

\noindent\textbf{Dataset and Preprocessing. } 
We collected all available human pathway networks from the widely used knowledge database, KEGG \cite{kanehisa2000kegg}.
Our dataset consists of four main classes: Human Diseases, Metabolism, Molecular and Cellular Processes, and Organismal Systems, covering 301 bio-networks.
For nodes, we ensured that all protein nodes in the network were linked to their reference AA sequence data. 
The detailed data description and preprocessing can be found in Appendix~\ref{app:dataprecessing} and Table \ref{tb:data_sum}.

\noindent\textbf{Experimental Setup.} 
We conducted 10-fold stratified K-Fold cross-validation repeated five times. 
The optimal hyperparameters were determined using grid search.
Training for all models was implemented on NVIDIA A6000 GPU and Xeon Gold 6258R CPU. 

\begin{table}[t]
\centering
\caption{The computational efficiency comparison with hybrid models, including both training and inference runtime. }
\resizebox{0.99\linewidth}{!}{%
\label{tab:computational}
\begin{tabular}{lcc}
\toprule
\textbf{Methods} & \textbf{Training Time (msec)} & \textbf{Inference Time (msec)} \\
\midrule
GPS              &  29.2 $\pm$ 2.3             &   10.3 $\pm$ 0.3  \\
Graph-Mamba               &  34.8 $\pm$ 0.4            & 9.5 $\pm$ 0.2                               \\
\classifier       &  \textbf{24.4 $\pm$ 0.9}          &  \textbf{6.9 $\pm$ 0.2}       \\                     
\bottomrule
\end{tabular}%
}
\vspace{-0.2cm}
\end{table}

\subsection{Experiment-I: Pathway Learning}
\label{subsec:exp1}

\textbf{Objective.} This experiment aims to evaluate whether \method can classify diverse bio-networks and benchmark its performance against baseline models.

\noindent\textbf{Baselines and Metrics.}
We collected baselines from both message-passing GNNs and more advanced graph models, including GCN \cite{GCN}, GraphSAGE \cite{GraphSAGE}, GAT \cite{GAT}, GIN \cite{GIN}, GPS \cite{GPS}, and Graph-Mamba \cite{Geraph-Mamba}.
The detailed introduction of these baselines and the selection motivation can be found in Appendix \ref{app:gnnbaseline}.
We employed precision, recall, and overall accuracy for the performance evaluation. We used 650M ESM-2 for \classifier and all baselines as the node feature encoding model.

\noindent\textbf{Results.}
Table~\ref{tab:classification_result} demonstrates that \classifier achieves the highest accuracy (0.744), outperforming all GNNs, GPS (0.726), GraphMamba (0.723). 
Furthermore, it secures best or second-best positions across all functional categories, demonstrating its robust ability to generalize across diverse pathway structures.
The gray-shaded rows indicate the results of removing ESM-2 and modifying the model size in terms of F1 scores.
When ESM-2 is removed, the accuracy decreases significantly (0.74 → 0.44).
The results highlight the importance of AA-seq and the limitations of prior studies that were unable to leverage this information. 

Table \ref{tab:computational} compares the training and inference times of our model with other expressive hybrid models, using a batch size of 32. Our training time is 30\% faster than GPS, and inference time is 27\% faster than Graph-Mamba (complexity analysis can be found in Appendix \ref{app:complexity}).

\begin{figure}[t]
    \centering
    \includegraphics[width=0.88\linewidth]{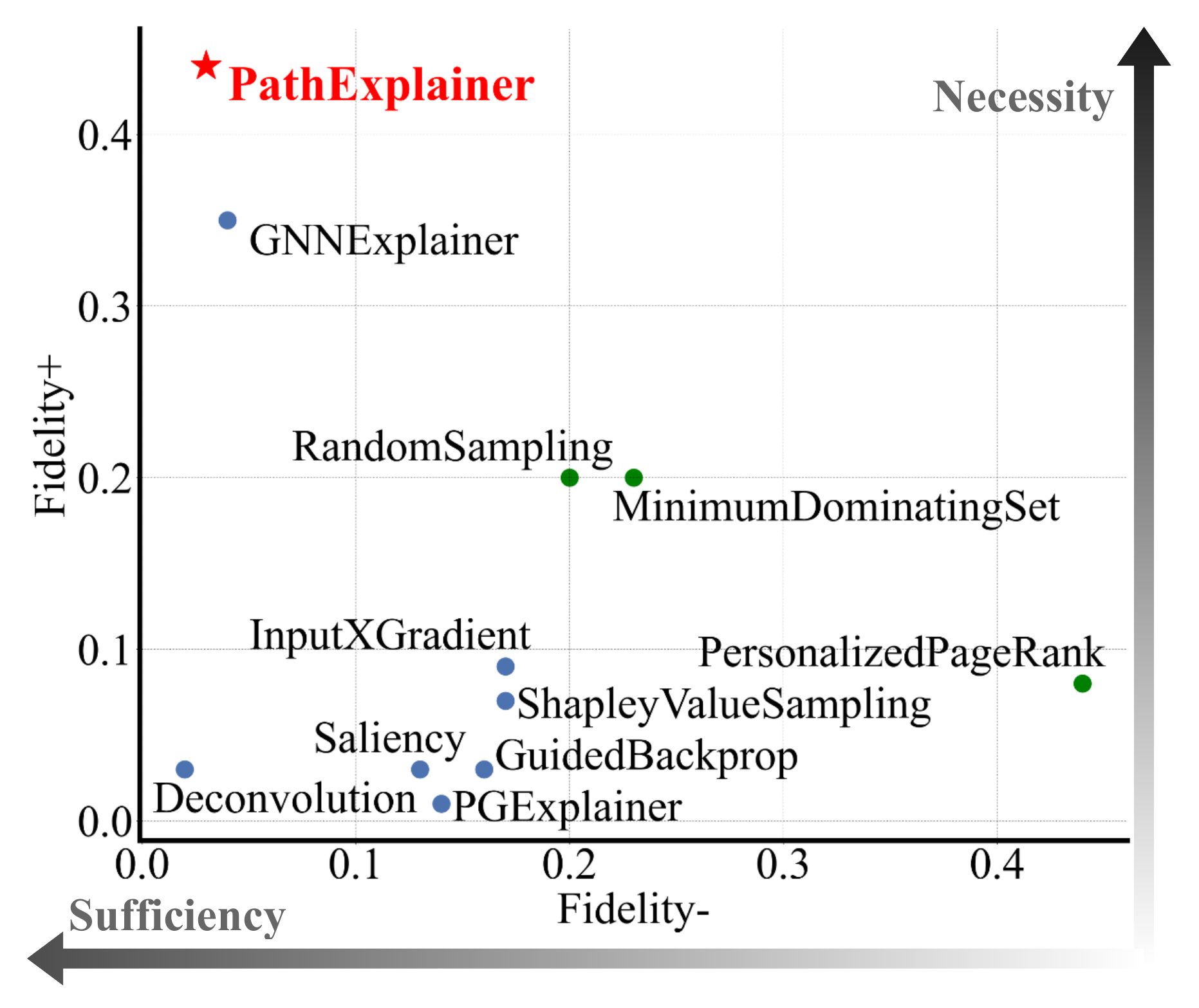} 
    \vspace{-0.4em} 
    \caption{Fidelity+ (necessity $\uparrow$) and Fidelity- (sufficiency $\downarrow$) scores of extracted subgraphs.
    Our \explainer achieves the best performance on both metrics.}
    \vspace{-0.4em}
    \label{fig:scatter}
    \vspace{-0.4em}
\end{figure}

\subsection{Experiment-II: Pathway Inference}
\label{subsec:exp2}

\textbf{Objective. } This experiment aims to quantify the fidelity of extracted subgraphs using \explainer and validate the importance of pathways specific to biological functions.

\noindent\textbf{Baselines.}
We collected baselines from three categories (i) conventional statistical methods: Random Sampling \cite{10.1093/bioinformatics/bth163}, PersonalizedPageRank \cite{PMID:21149343}, and MinimumDominatingSet \cite{NACHER201657, doi:10.1073/pnas.1311231111}); (ii) gradient-based methods: Saliency \cite{saliency}, InputXGradient \cite{InputXGradient}, Deconvolution \cite{deconvolustion}, ShapleyValueSampling \cite{shapley}, and GuidedBackpropagation \cite{guidedbackpropagation}; (iii) GNN-specific explainer method: GNNExplainer \cite{gnnexplainer} and PGExplainer \cite{pgexplainer}.
All details can be found in Appendix \ref{app:gnnbaseline}. 

\noindent\textbf{Metrics. }
We evaluated the distinctiveness of the pathways inferred by \explainer using fidelity metrics, \emph{Fidelity+} and \emph{Fidelity-}.
Fidelity+ measures how well the selected features support accurate predictions, while Fidelity- checks how much accuracy drops when only those features are kept.
We further evaluated the length of pathways.  
\emph{Max Path Length} captures the longest simple path in each subgraph, reflecting whether long signaling chains are retained.
\emph{Average Diameter} measures the typical node-to-node distance, showing how spread out the nodes remain after extraction.
All details can be found in Appendix \ref{app:gnnbaseline} and \ref{metric}.

\begin{table}[t]
\centering
\caption{Comparison of pathway-preservation ability across subgraph extraction baselines. Higher path length and diameter indicate better retention of long-range interactions.}
\resizebox{0.99\linewidth}{!}{%
\label{tab:length}
\begin{tabular}{lcc}
\toprule
\textbf{Methods} & \textbf{Max Path Length} & \textbf{Average Diameter} \\
\midrule
Minimum Dominating Set              &  6          &   2.00  \\
Random Sampling              &  11          &   2.90  \\
Personalized Page Rank    &  4          &   1.53  \\
\classifier-GNNExplainer                 &  9     & 2.90                              \\
GPS-\explainer      &  12         &  3.95      \\               
\textbf{\method} (Ours)       &  \textbf{16}         &  \textbf{4.20}      \\                     
\bottomrule
\end{tabular}%
}
\vspace{-0.2cm}
\end{table}

\noindent\textbf{Results.} 
Figure \ref{fig:scatter} shows that \explainer achieves the highest fidelity+ and the lowest fidelity-. 
The main reason is that GNNExplainers optimise a mask for each individual node or edge level, whereas \explainer infers pathways (subgraphs) as a single coherent unit. 
Deconvolution simply aggregates all edges with large gradients, it almost covers every active edge.
This  pushes low fidelity$^{-}$, but since it retains many redundant edges, removing them hardly changes the output, so \emph{fidelity$^{+}$} (necessity) stays low.
GNN-specific or gradient-based methods (\textcolor{blue}{blue points}) show lower fidelity- compared to traditional methods (\textcolor{dustygreen}{green points}), indicating that the learned AA-seq enables the identification of sufficient subgraphs. 

Table~\ref{tab:length} presents that our method attains  up to 4 ×  longer preserved paths and up to 2.7 × larger diameters than competing approaches.
This supports that the identified sufficient and necessary features capture biologically meaningful pathways and meets our \underline{Expectation}.

\subsection{Experiment-III: Biological Meaningfulness}
\textbf{Objective. }
We propose an evaluation workflow to analyze the biological significance of the subgraphs and pathways extracted from our method. 
This workflow should integrate the \emph{weighting/ranking scores of pathway inferred by \method} into biological metrics, enabling the direct quantification of outputs from the models. 

\noindent\textbf{Proposed Evaluation Metrics. }
We designed experiments centered on Gene Ontology (GO) analysis \cite{GO}, focusing on the nodes within the extracted subgraphs. 
The results provide a list of GO terms highlighting the biological functions most significantly represented in the input gene (corresponding to protein) nodes \cite{GO}.
Then we proposed Number of Enriched Biological Functions (\textbf{\#EBF}) and Enrichment Contribution Score (\textbf{ECS}) to evaluate breadth and depth of the extracted functions \cite{GeSubNet2024}.
A higher \#EBF indicates broader functional diversity within the subgraph.
ECS evaluates the relative contribution of the top-weighted genes.
The details and definitions of metrics can be found in Appendix \ref{metric}.

\begin{table}
\centering
\caption{Biological meaningfulness comparison results: The best-performing results are highlighted in \textbf{bold}. The second-best results are highlighted in \underline{underline}. }
\resizebox{0.90\linewidth}{!}{%
\label{table: bio_result}
\begin{tabular}{lccc} 
\toprule
                        Methods & \#EBF ($\uparrow$)          & ECS ($\uparrow$)           & P-value ($\downarrow$)         \\ 
\midrule
RSS                      & 5.29           & 0.27          & 0.045           \\
MDS                      & 6.34           & 0.23          & 0.043           \\
PPR                      & 6.64           & 0.23          & 0.042           \\ 
\midrule
GIN-GNNE                 & 6.94           & 0.59          & 0.041           \\
GPS-GNNE                 & 8.88           & 0.22          & 0.039           \\
GraphMamba-GNNE              & 10.73          & 0.21          & 0.042           \\
PathMamba-GNNE           & \underline{11.89}  & \underline{0.73}  & \textbf{0.036}  \\ 
\midrule
GIN-PathE                & 11.06          & 0.69          & 0.041           \\
GPS-PathE                & 8.26           & 0.43          & \underline{0.037}   \\
GraphMamba-PathE             & 10.89          & 0.59          & 0.038           \\
\textbf{\method} & \textbf{14.77} & \textbf{0.84} & \textbf{0.036}  \\
\bottomrule
\end{tabular}
}
\end{table}

\begin{figure}[t]
\centering
\includegraphics[width=0.91\linewidth]{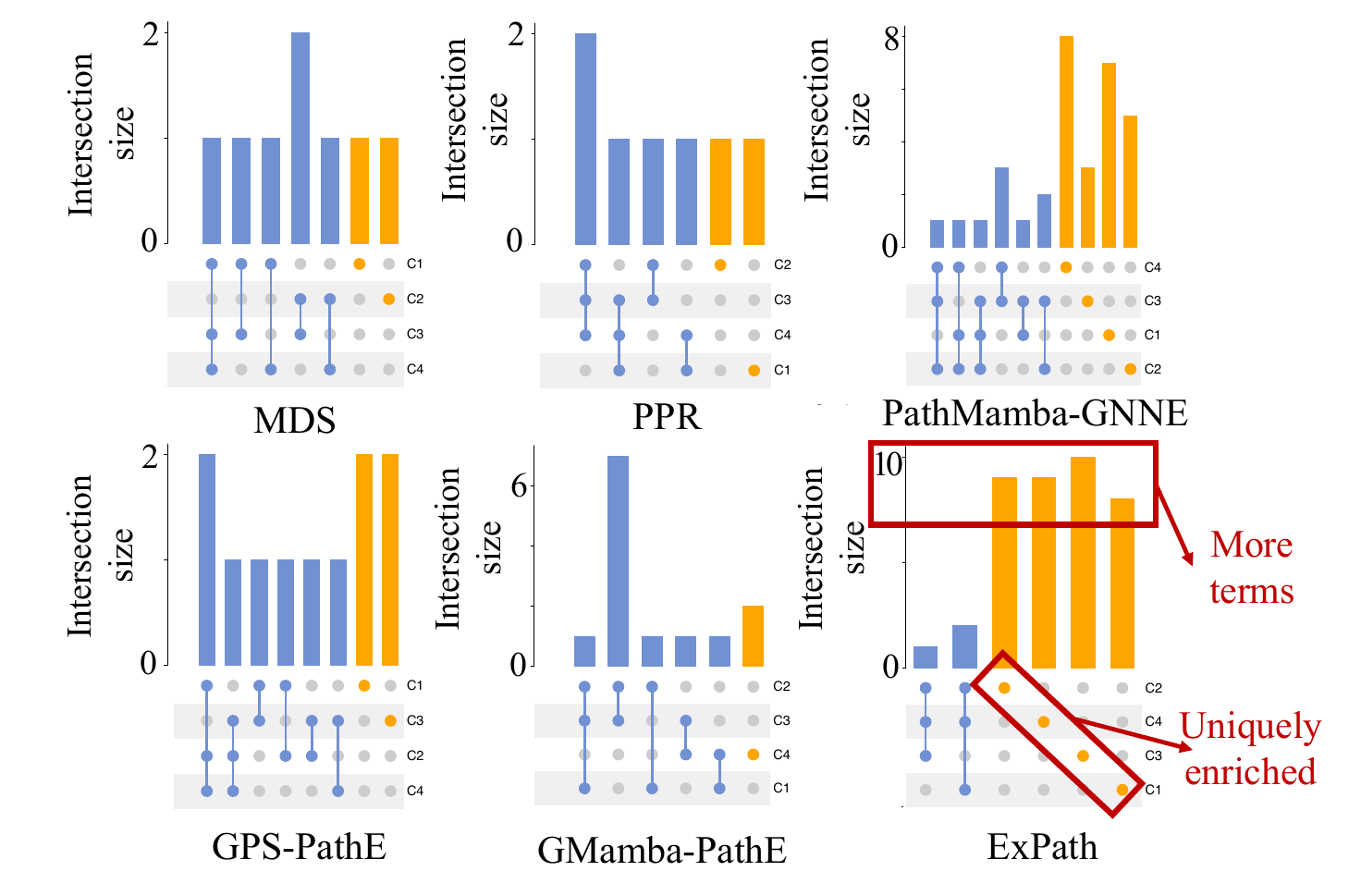}
\caption{UpSet plot of enriched GO terms across four pathway classes, based on top feature sets from subgraphs for different methods. Orange indicates GO terms uniquely enriched in one class, and blue represents GO terms shared across multiple classes. MDS, and PPR stand for Minimum Dominating Set, and Personalized PageRank, respectively. 
}
\label{fig:go}
\end{figure}

\begin{figure}[t]
\centering
\includegraphics[width=0.8\linewidth]{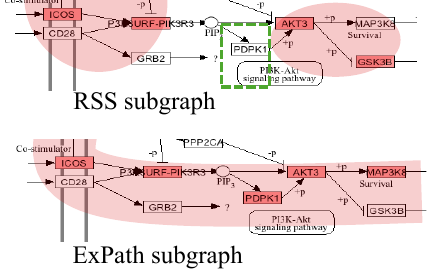}
\caption{Comparison of subgraphs extracted from the TCR signaling pathway using two different methods. TCR Subgraph A is from the RSS method, and TCR Subgraph B is from the proposed method. The subgraph nodes and their signaling modules are colored in red. The disruptions within signaling paths are marked in green boxes.
A full pathway graph can be found in Appendix H.
}
\label{fig:path}
\end{figure}

\noindent\textbf{Results. }
Table~\ref{table: bio_result} presents the biological meaningfulness comparison results for subgraphs extracted using different methods. 
Overall, PathMamba-PathE achieves the highest performance across \#EBF, ECS, and P-value. 
This highlights its ability to extract biologically relevant structures within pathway networks, effectively balancing breadth and depth.
While conventional methods (RSS, MDS, and PPR) perform relatively poorly in overall \#EBF and ECS, with almost boundary P-values achieved.

Figure \ref{fig:go} evaluates the differences in enriched GO terms across four pathway classes based on top gene sets from subgraphs extracted by different methods.
The upset plot reveals that PathMamba-PathE identifies the most extensive sets of unique GO terms (shown as the orange bars and links) across all four pathway classes while maintaining fewer shared terms (shown as the blue bars and links) among different classes.
This suggests that PathMamba-PathE tends to assign appropriate weights to genes based on their importance within the network and effectively captures the distinct biological roles of top-ranked genes in specific pathway classes.

\subsection{Experiment IV: PoC of Biological Case Study}

\noindent\textbf{Setup.}  
We make a Proof-of-Concept case analysis using the T cell receptor (TCR) signaling pathway, which is a well-characterized human pathway.
In this case study, we compare subgraphs extracted by two methods: TCR Subgraph A, generated using the RSS method, and Subgraph B, obtained via our proposed method. 
Each method selects the top 10\% highest-ranked nodes and their associated edges to construct a representative subgraph.
The detailed information and analysis of the case study can be found in Appendix~\ref{app:casestudy}.

\noindent\textbf{Results. }  
In Figure~\ref{fig:path}, the upper subfigure presents the TCR subgraph extracted by the RSS method, while the lower one corresponds to our \method. 
In subgraph A, generated by the RSS method, high scores are distributed uniformly across a broad range of nodes within the TCR pathway. 
However, this suggests unnatural, fragmented signal propagation, as evidenced by the numerous isolated red-marked nodes and a green broken connection in PDPK1.
In contrast, subgraph B, extracted by our method, exhibits a strong focus on the PI3K-AKT signaling axis~\cite{PI3K} and the downstream components of the MAP3K8 survival~\cite{nfkb}, as highlighted by a coherent red-marked path.

\noindent\textbf{Discussion. }  
In summary, the extracted subgraphs by \method align with the needs of real-world pathway analysis practices: maintaining signal continuity within regulatory cascades and even accommodating relatively long signaling paths, making them more suitable for focused analyses of bio-network regulatory mechanisms.

\section{Conclusion}
    \label{sec:conclusion}
    We introduced \method, a novel framework for understanding targeted pathways within biological knowledge bases. \method integrates \classifier, a hybrid model to capture local and global dependencies; and \explainer, a subgraph learning module that identifies key nodes and edges via trainable pathway masks. 
\method seamlessly integrated biological foundation models to encode the experimental molecular data.
We also introduced machine-learning-oriented biological evaluations and a new metric.
The experiments involving 301 bio-networks evaluations demonstrated that pathways inferred by \method maintain biological meaningfulness.
Future work will expand \method to analyze other types of bio-networks, enabling broader applications in systems biology and medicine.
\section{Acknowledgments}
    The authors would like to thank the reviewers.
This work was supported by
JST BOOST, Japan Grant Number JPMJBS2402, “Program for Leading Graduate Schools” of The University of Osaka, 
JSPS KAKENHI Grant-in-Aid for Scientific Research Number JP24K20778, 
NSF award SCH-2205289, SCH-2014438, and IIS-2034479, 
JST CREST JPMJCR23M3,
JST START JPMJST2553,
JST CREST JPMJCR20C6,
JST K Program JPMJKP25Y6,
JST COI-NEXT JPMJPF2009,
JST COI-NEXT JPMJPF2115,
the Future Social Value Co-Creation Project - The University of Osaka.
    

\bibliography{BIB/main}

\clearpage

\setcounter{secnumdepth}{2} 

\appendix
 \section*{Appendix}
     \label{sec:appendix}
     

\begin{table}[h]
    \centering
    \caption{Summary of pathway data across four pathway classes.}
\resizebox{0.95\linewidth}{!}{%
    \begin{tabular}{llcccc}
        \toprule
        \multicolumn{2}{l}{Pathway class}& \#Samples & \#Nodes & \#Edges & AA-seq Length \\
        \midrule
        C1&Human Diseases & 83 & 40 & 42 & 583 \\
        C2&Metabolism & 78 & 16 & 42 & 511 \\
        C3&\begin{tabular}[c]{@{}l@{}}Molecular and \\cellular processes\end{tabular} & 80 & 30 & 37 & 636 \\
        C4&Organismal systems & 60 & 44 & 49 & 638 \\
        \midrule
        \multicolumn{2}{l}{Overall}  & 301 & 32 & 42 & 590 \\
        \bottomrule
    \end{tabular}
    }
    \label{tb:data_sum}
\end{table}

\section{Graph isomorphism and WL test}
\label{appendix:tools}
\textbf{Graph isomorphism} refers to the problem of determining whether two graphs are structurally identical, meaning there exists a one-to-one correspondence between their nodes and edges. This is a crucial challenge in graph classification tasks, where the goal is to assign labels to entire graphs based on their structures. A model that can effectively differentiate non-isomorphic graphs is said to have high expressiveness, which is essential for accurate classification. In many cases, graph classification models like GNNs rely on graph isomorphism tests to ensure that structurally distinct graphs receive different embeddings, which improves the model’s ability to correctly classify graphs. 

\noindent\textbf{Weisfeiler-Lehman (WL) test} is a widely used graph isomorphism test that forms the foundation of many GNNs. In the 1-WL framework, each node's representation is iteratively updated by aggregating information from its neighboring nodes, followed by a hashing process to capture the structural patterns of the graph. GNNs leveraging this concept, such as Graph Convolutional Networks (GCNs) and Graph Attention Networks (GATs), essentially perform a similar neighborhood aggregation, making them as expressive as the 1-WL test in distinguishing non-isomorphic graphs \citep{GIN}.  
Modern GNN architectures adhere to this paradigm, making the 1-WL a standard baseline for GNN expressivity.

\section{Proof}
\label{app:proof}
\exprforexpl*
\begin{proof}
Given equivalent graphs $\mathcal{G}_1$ and $\mathcal{G}_2$, let $f : \mathcal{G} \to \mathbb{R}^k$ be a GNN-based model, and let $\sim$ denote the equivalence relation induced by $f$, i.e.,
\[
\mathcal{G}_1 \sim \mathcal{G}_2 \quad \Longleftrightarrow \quad f(\mathcal{G}_1) = f(\mathcal{G}_2).
\]
We define an \emph{explanatory subgraph}
$\hat{\mathcal{G}}\subseteq\mathcal{G}$ and consider a \classifier objective given by

\[
\mathcal{L}\bigl(\hat{\mathcal{G}};f,\mathcal{G}\bigr)
   \;=\;
   \alpha\,D\bigl(f(\mathcal{G}),f(\hat{\mathcal{G}})\bigr)
   \;+\;
   \beta\,\Omega\bigl(\hat{\mathcal{G}}\bigr),
\]
where $D(\cdot,\cdot)$ is a divergence between model outputs and
$\Omega(\cdot)$ penalises the size or complexity of~$\hat{\mathcal{G}}$.
The optimal explanation for $\mathcal{G}$ is
\[
E(\mathcal{G})
   \;=\;
   \operatorname*{arg\,min}_{\hat{\mathcal{G}}\subseteq\mathcal{G}}
   \mathcal{L}\!\bigl(\hat{\mathcal{G}};f,\mathcal{G}\bigr).
\]
If $\mathcal{G}_1 \sim \mathcal{G}_2$, then any pair of optimal explanations $E(\mathcal{G}_1)$ and $E(\mathcal{G}_2)$ 
must yield the same minimum objective value. Consequently, there is no unique 
explanation across $\mathcal{G}_1$ and $H$ within the same equivalence class.
Since $\mathcal{G}_1 \sim \mathcal{G}_2$, we have $f(\mathcal{G}_1) = f(\mathcal{G}_2)$. 
By definition of $E(\mathcal{G}_1)$,
\[
\mathcal{L}\bigl(E(\mathcal{G}_1); f, \mathcal{G}_1\bigr) \;=\;
\min_{\hat{\mathcal{G}}_1 \subseteq \mathcal{G}_1} \; \mathcal{L}(\hat{\mathcal{G}}_1; f, \mathcal{G}_1).
\]
Similarly, for $\mathcal{G}_2$,
\[
\mathcal{L}\bigl(E(\mathcal{G}_2); f, \mathcal{G}_2\bigr) \;=\;
\min_{\hat{\mathcal{G}}_2 \subseteq \mathcal{G}_2} \; \mathcal{L}(\hat{\mathcal{G}}_2; f, \mathcal{G}_2).
\]
Since $f(\mathcal{G}_1) = f(\mathcal{G}_2)$, the divergence term 
$D\bigl(f(\mathcal{G}_1),f(\hat{\mathcal{G}_1})\bigr)$ behaves the same as 
$D\bigl(f(\mathcal{G}_2),f(\hat{\mathcal{G}_2})\bigr)$ for corresponding substructures $\hat{\mathcal{G}_1}$ and  $\hat{\mathcal{G}_2}$.
Thus, for any $\mathcal{G}_1 \sim \mathcal{G}_2$, 
\[
\mathcal{L}\bigl(E(\mathcal{G}_1); f, \mathcal{G}_1\bigr) = \mathcal{L}\bigl(E(\mathcal{G}_2); f, \mathcal{G}_2\bigr),
\]
which implies the optimal explanations are not uniquely determined 
beyond the equivalence class $[\mathcal{G}_1]$ (the set of all graphs equivalent to $\mathcal{G}_1$). 
In other words, \explainer 
cannot uniquely distinguish between subgraphs $\hat{\mathcal{G}_1}$ and $\hat{\mathcal{G}_2}$ such that $\mathcal{G}_1 \sim \mathcal{G}_2$. 
Less expressive models 
merge different graphs into larger equivalence classes, 
leading to non-unique, less granular explanations.
In contrast, 
when combined with higher expressive models (i.e. \classifier), 
\explainer can generate more finely differentiated explanation subgraphs.
\qedhere
\end{proof}

\section{Computational Complexity Analysis}
\label{app:complexity}
Given \( K \) tokens, the complexity of Mamba \cite{Mamba} is  linear with respect to \( K \). 
For \( m \geq 1 \), for each node \( v \in V \), we generate \( |V| \) sampled pathways with length \( m \), the time complexity of global module mamba
would be: 
\[ O(|V| \times m), \]
since we have \( O(|V| \times m) \) tokens. 
Our \classifier is faster than the quadratic time complexity \( O(|V| ^2) \) of graph transformers \cite{GPS}.

In practice, combined with GNN, which requires \( O(|V| + |E|) \) time,
the total complexity would be:
\[
O(|V| + |E|),
\]
dominated by the GNN complexity,
since $m$ represents only a subset of pathways sampled from the total possible nodes \( V \) 
(\(m \ll |V|\)).

\section{Preprocessing}
\label{app:dataprecessing}
\paragraph{Overview.} The KEGG database is a comprehensive resource that integrates genomic, chemical, and systemic functional information, providing curated pathway networks for various biological processes derived from experimental data and expert annotations.
For human pathways, KEGG offers detailed representations of processes such as metabolism, genetic information processing, environmental information processing, and human diseases.
Each category includes pathways organized into networks, where nodes represent biological entities—such as genes, proteins, enzymes, or metabolites—and edges denote their interactions.
These interactions encompass direct biochemical reactions, regulatory relationships, and signaling pathways that govern cellular mechanisms, ultimately forming pathways related to various functional biological processes.   
\paragraph{Data Acquisition.} We searched for and downloaded all the raw data for the human pathway network (referred to as Homo Sapiens in most bio-databases) using KEGG APIs.
The data underwent a series of preprocessing steps to ensure its quality and relevance.
\paragraph{(1) Node Filtering and Feature Construction.} We ensured that all protein nodes in the network were linked to well-characterized genetic origins, specifically their reference amino acid sequence data \cite{keggseq}. 
Using KEGG's gene-to-protein mapping, we filtered the dataset to retain only protein nodes with associated genomic annotations \cite{keggmapper}. 
Protein nodes lacking sequencing data or genetic associations in KEGG were excluded to reduce noise caused by incomplete or ambiguous sequence information. 
For the retained nodes, their amino acid sequences were extracted and utilized as input features, ensuring a biologically meaningful representation for the learning task.

\paragraph{(2) Edge and Structure Cleaning.} We streamlined the network structure by removing redundant or biologically insignificant interactions. 
Specifically, we eliminated non-functional self-loops (edges connecting a node to itself without annotated biological relevance) and isolated nodes lacking any edges. 
This process included both nodes that were initially isolated and those rendered isolated following the first step of node filtering. 
Since these elements no longer contributed to the network's connectivity or functional variation, their removal reduced unnecessary complexity and ensured the network focused exclusively on meaningful and biologically interpretable interactions \cite{clean}.

\paragraph{(3) Graph Conversion} We preserved the edge with properties of protein-protein interactions while removing directional information to transform the network into an undirected graph. 
This conversion enabled the analysis to emphasize undirected, pairwise interactions, which are often more pertinent to network-based studies, such as clustering, community detection, or functional enrichment analyses.

\paragraph{(4) Functional Categorization.} The pathways were organized into functional classes based on their KEGG pathway labels, ensuring that biologically related pathways were grouped together. 
The original labels for Environmental Information Processing and Genetic Information Processing were combined into a unified class, Molecular and Cellular Processes, to reflect their shared biological roles in cellular signaling, communication, and gene regulation \cite{kegglabel}. 
As a result, the pathway data used in this study was categorized into four main classes: Human Diseases, Metabolism, Molecular and Cellular Processes, and Organismal Systems.

\section{Baselines}
\label{app:gnnbaseline}

\subsection{Classifier models.}
We collected baselines from both message-passing GNNs and more advanced hybrid graph models. 
\begin{itemize}[left=0pt]
    \item \textbf{Message-passing GNNs}: 
Graph Convolutional Network (GCN) \cite{GCN} serves as a foundational GNN, leveraging spectral graph theory for node feature aggregation. 
GraphSAGE \cite{GraphSAGE} improves scalability by introducing neighborhood sampling and learnable aggregation functions. 
Graph Attention Network (GAT) \cite{GAT} incorporates attention mechanisms to assign different importance to neighbors during feature aggregation. 
Graph Isomorphism Network (GIN) \cite{GIN} achieves high expressivity, distinguishing graph structures with a focus on injective neighborhood aggregation. 
\item \textbf{Hybrid graph models}: 
GPS \cite{GPS} combines GNNs with transformer-style global attention to effectively process both local and global graph structures. 
Similarly, Graph-Mamba \cite{Geraph-Mamba} processes local structures using GNNs and leverages the Mamba module to capture global node relationships.
\end{itemize}

\subsection{Explainer methods.} 
We collected baselines from statistical methods, gradient-based methods, and GNN-specific explainer methods. 
\begin{itemize}[left=0pt]
    \item \textbf{Statistical methods}: 
    Random Sampling (RRS) \cite{10.1093/bioinformatics/bth163} serves as a simple baseline by selecting nodes or edges randomly for comparison. 
Personalized PageRank (PPR) \cite{PMID:21149343} computes node importance by incorporating a teleportation mechanism that biases the random walk towards specific nodes, effectively capturing both local and global graph structures. 
Minimum Dominating Set (MDS) \cite{NACHER201657, doi:10.1073/pnas.1311231111} identifies a minimal set of nodes that can collectively influence or dominate all other nodes in the graph, providing insights into critical nodes for coverage or control.
Notably these three statistical methods only do not use AA sequence node features.
\item \textbf{Gradient-based methods}: 
Saliency \cite{saliency} highlights features based on the magnitude of input gradients. 
InputXGradient \cite{InputXGradient} combines input features with their gradients to capture feature significance. 
Deconvolution \cite{deconvolustion} focuses on reconstructing important input features, emphasizing positive influences. 
ShapleyValueSampling \cite{shapley} estimates feature importance using a game-theoretic approach. 
GuidedBackpropagation \cite{guidedbackpropagation} refines gradients to highlight only relevant activations.
\item \textbf{GNN-specific explainability approaches}, we adopted GNNExplainer \cite{gnnexplainer} to uncover subgraphs and features that are critical for predictions, and PGExplainer \cite{pgexplainer}, which uses a neural network to identify significant graph components.
\end{itemize}

\section{Metrics}
\label{metric}

\begin{itemize}[left=0pt]
    \item \textbf{Fidelity+}: Fidelity+ measures how well the important features identified by the model contribute to accurate predictions. It is defined as:  
\[
\text{Fidelity+} = \frac{1}{N} \sum_{i=1}^{N} \big( f(G_i) - f(G_i \setminus S_i) \big),
\]
where \( f(G_i) \) is the prediction score for graph \( G_i \), and \( f(G_i \setminus S_i) \) is the prediction score after removing the subgraph \( S_i \) identified as important.
\item \textbf{Fidelity-}: Fidelity- evaluates the drop in prediction accuracy when the identified important features are retained while others are removed. It is defined as:  
\[
\text{Fidelity-} = \frac{1}{N} \sum_{i=1}^{N} \big( f(S_i) - f(G_i) \big),
\]
where \( f(S_i) \) is the prediction score for the retained subgraph \( S_i \), and \( f(G_i) \) is the original score.
\item \textbf{Diameter}: Diameter of a (connected, unweighted) graph \(G=(V,E)\) is  
\[
\operatorname{diam}(G)=\max_{u,v\in V}\; d_{G}(u,v),
\]
where \(d_{G}(u,v)\) denotes the length (number of edges) of the shortest path between vertices \(u\) and \(v\).  
Intuitively, it is the length of the \textbf{longest} among all \textbf{shortest} paths, capturing the farthest distance that must be traversed within the graph.  
Because each method yields many subgraphs, we report the average diameter:
\[
\overline{D}=\frac{1}{|\mathcal{S}|}\sum_{G_S\in\mathcal{S}}\operatorname{diam}(G_S),
\]
i.e.\ the mean diameter over the set \(\mathcal{S}\) of all extracted subgraphs.  
A larger \(\overline{D}\) indicates that the explanations retain longer end-to-end interactions, aligning with pathway-level preservation. 

\end{itemize}

As a classic method of biological functional enrichment analysis, GO analysis evaluates whether specific biological processes, molecular functions, or cellular components are statistically overrepresented in a given set of genes (i.e., gene nodes from subgraphs) compared to a background gene set \cite{enrichment}.
The results provide a list of GO terms that highlight the biological functions most significantly represented in the input gene nodes \cite{GO}.
In our study, we conducted GO enrichment analysis using the R package \texttt{clusterProfiler} \cite{clusterProfiler} to identify enriched GO terms.

\begin{itemize}[left=0pt]
    \item \textbf{\#EBF}: To assess \textbf{Breadth} of the subgraphs' biological functions, we use the Number of Enriched Biological Functions (\#EBF) as a metric. 
For an input subgraph \( S_i \), \#EBF is defined as:
\[
\#EBF(S_i) = |GO_{\text{enriched}}(S_i)|,
\]
where \( S_i \) denotes the i-th input subgraph, \( GO_{\text{enriched}}(S_i) \) is the set of significantly enriched GO terms associated with the genes in \( S_i \), and \( |GO_{\text{enriched}}(S_i)| \) is the size of the set of enriched GO terms for subgraph \( S_i \).
A high \#EBF value indicates broader functional diversity within the subgraph.

\item \textbf{ECS}: To assess \textbf{Depth} of the subgraphs' biological functions, we used the Enrichment Contribution Score (ECS) as a metric.  
The ECS evaluates the relative contribution of the top-weighted genes, denoted as \( G_{\text{Top}} \), to the enrichment of biological functions. 
The ECS can be assessed by following steps:
Let \( G = \{ g_1, g_2, \dots, g_n \} \) represent the ranked list of gene nodes, sorted by their importance scores (weights) \( w = \{ w_1, w_2, \dots, w_n \} \), where \( w_1 \ge w_2 \ge \dots \ge w_n \).  
Define \( G_{\text{Top}} = \{ g_1, g_2, \dots, g_{\text{Top}} \} \) to include only genes with the top weights, selected based on a ratio \( R\% \) (defaulted as 30\%), as a subset of \( G \).
Then, perform GO analysis based on \( G_{\text{Top}} \) for each input subgraph \( S_i \).
The ECS is calculated as the average number of enriched items for each gene in \( G_{\text{Top}} \) across all subgraphs \( S_i \), and is defined as:
\[
\text{ECS} = \frac{1}{P} \sum_{i=1}^{P} \frac{|GO_{\text{top-enriched}}(S_i)|}{|G_{\text{Top}}|},
\]
where \( P \) is the total number of tested subgraphs, \( GO_{\text{top-enriched}}(S_i) \) is the set of enriched GO terms for subgraph \( S_i \) based on \( G_{\text{Top}} \), and \( |G_{\text{Top}}| \) is the number of genes in the subset \( G_{\text{Top}} \).

\item \textbf{P-value}: To assess \textbf{Reliability} of the subgraphs' biological functions, we use the well-established statistical concept P-value as a metric. 
Specifically, the P-value is calculated as the average of the P-values from statistical tests performed for biological function enrichment in each subgraph above.
Since during the GO analysis, we accept the item only with a P-value lower than 0.05, the average P-value reported here is naturally lower than this threshold.
A lower average P-value indicates greater reliability in the enrichment results across the subgraphs.
\end{itemize}

\section{Ablation Study}

\begin{table}[h]
\centering
 \caption{Sequential Mamba module evaluation with LSTM. }
\resizebox{0.6\linewidth}{!}{%
\label{tab:architecture}
\begin{tabular}{llcc}
\toprule
Models & Overall Accuracy \\
\midrule
\classifier      & 0.744 ± 0.015   \\  
w/ LSTM            & 0.730 ± 0.014                              \\
                   
\bottomrule
\end{tabular}%
}
\end{table}

\begin{figure*}
\centering
\includegraphics[width=0.99\linewidth]{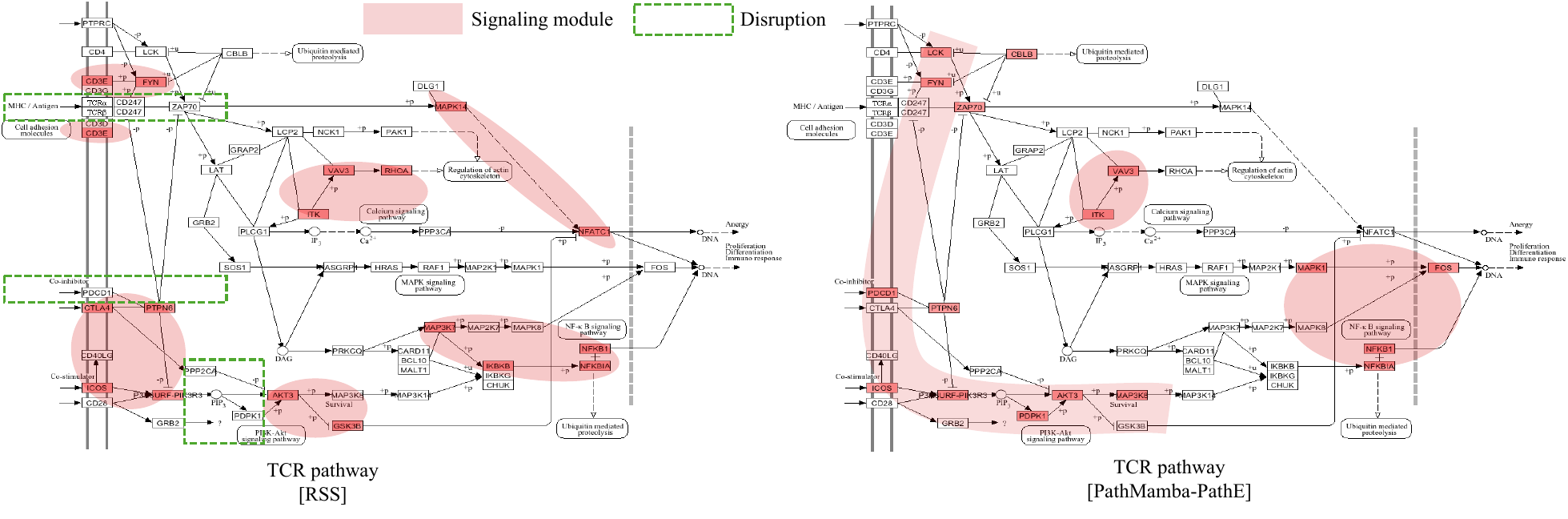}
\caption{Comparison of subgraphs extracted from the TCR signaling pathway using two different methods. The TCR Subgraph on the left is from the RSS method, and the TCR Subgraph on the right is from the proposed method. The subgraph nodes and their signaling modules are colored in red. The disruptions within signaling paths are marked in green boxes.}
\label{fig:path_all}
\end{figure*}

\subsection{Sequential module evaluation}
Table \ref{tab:architecture} shows the results when the Mamba module of \classifier was replaced with an RNN-based model (LSTM). The classification accuracy was 0.730, and \classifier (0.744) exceeds the LSTM-based model. This suggests that Mamba's selective state-space modeling is particularly important for identifying complex biological relationships in pathways where both local interactions and long-range functional dependencies influence biological outcomes.

\subsection{Classifier evaluation}
\begin{table}[t]
\centering
 \caption{\explainer fidelity score comparison with other classifier models. }
\resizebox{\linewidth}{!}{%
\label{tab:classifier}
\begin{tabular}{llcc}
\toprule
Methods &Fidelity+ & Fidelity- & (Accuracy) \\
\midrule
GIN             &  0.689 $\pm$ 0.012             &   0.390 $\pm$ 0.008   & (0.717 ± 0.013)\\
GPS            &  0.763 $\pm$ 0.017             &   0.529 $\pm$ 0.011    & (0.726 ± 0.014)\\
Graph-Mamba               &  0.708 $\pm$ 0.015            & 0.430 $\pm$ 0.012 & (0.723 ± 0.014)                              \\
\classifier       &  0.442 $\pm$ 0.012          &  0.037 $\pm$ 0.008    & (0.744 ± 0.015)   \\                     
\bottomrule
\end{tabular}%
}
\end{table}
Table \ref{tab:classifier} presents the results of \explainer when the classifier model is changed. In our \classifier, the fidelity- score is reduced to less than one-tenth, indicating that the extracted subgraph alone produces nearly identical results. 
The lower fidelity+ score also shows strong representational power of \classifier, as it can still perform reasonably well even when important subgraphs are removed. For our objective of identifying function-specific pathways, subgraph sufficiency is particularly important, which our method achieves effectively.

\section{Case Study}\label{app:casestudy}

\subsection{T cell receptor (TCR) signaling pathway}
The T cell receptor (TCR) signaling pathway is a cornerstone of adaptive immunity, orchestrating antigen-specific T cell activation, clonal expansion, and effector differentiation \cite{TCR_1}. 
This pathway is initiated upon engagement of the TCR complex with peptide-MHC ligands, triggering a cascade of intracellular signaling events mediated by the Src-family kinases LCK and FYN, leading to phosphorylation of the immunoreceptor tyrosine-based activation motifs (ITAMs) within the CD3 and $\zeta$-chain subunits \cite{TCR_3}. 
Subsequent recruitment and activation of ZAP-70 further amplify downstream signaling through the LAT signalosome, engaging multiple adaptor proteins and second messengers that regulate key pathways, including calcium mobilization, Ras-MAPK, and NF-$\kappa$B signaling, which collectively drives gene transcription, metabolic reprogramming, and cytoskeletal remodeling necessary for T cell function \cite{TCR_2}. 

Precise modulation of these signaling cascades is critical for maintaining immune homeostasis, as dysregulation is implicated in a spectrum of immune disorders, including autoimmunity, primary immunodeficiencies, and T cell malignancies, where aberrant activation or attenuation of TCR signaling disrupts immune tolerance, facilitates chronic inflammation, or drives oncogenic transformation. 
Analysis of the molecular intricacies of TCR signaling helps therapeutic interventions, including immune checkpoint modulation, CAR-T \cite{CART} cell engineering, and small-molecule inhibitors aimed at restoring immune balance and targeting immune-related diseases.

\subsection{Results and Discussion}
\textbf{TCR Subgraph A: The RSS Method. }  
As shown in Figure \ref{fig:path_all}, the subgraph on the left, generated by the RSS method, distributes high scores uniformly across a broad range of nodes within the TCR pathway. 
However, this hints at unnatural, fragmented signal propagation, as highlighted by numerous discrete red-marked nodes. 
The absence of coherent signaling continuity, as indicated by the disrupted green-boxed regions, suggests that the method fails to prioritize biologically meaningful regulatory modules. 
Since its broader coverage spans multiple branches of the pathway without emphasizing critical molecular hubs, it limits the utility in pinpointing key functional perturbations.

\paragraph{TCR Subgraph B: The Proposed Method. }  
In contrast, as shown in Figure \ref{fig:path_all}, the subgraph on the right extracted by our method exhibits a strong focus on the PI3K-AKT signaling axis \cite{PI3K} and the downstream components of the NF-\(\kappa\)B \cite{nfkb} pathway, as highlighted by the coherent red-marked path.
 These regions are believed to be crucial for regulating T cell survival, proliferation, and cytokine production \cite{nfkball}.  
Notably, the subgraph includes key regulatory genes such as \textit{MAPK1}, \textit{MAP3K8}, and \textit{NFKB1} \cite{MAPK1,NFKB1} within a compact set of prioritized nodes.  
The enrichment of these molecular hubs suggests that our method effectively captures biologically significant signaling elements, aligning with well-established immune regulatory mechanisms.  

In our case study, the proposed method provides a focused selection of key regulatory pathways, emphasizing PI3K-AKT and NF-\(\kappa\)B signaling and their downstream effectors, which are crucial for modulating immune responses. 
In contrast, the RSS method, while providing broader pathway coverage, lacks specificity, making it less suitable for pathway analyses requiring mechanistic interpretation.

\section{Implementation Details}
We performed a grid search over key hyperparameters to ensure optimal performance. Table \ref{tab:hyperparams} summarizes the values considered for each parameter. 

\begin{table}[ht]
\centering
\caption{Hyperparameter settings.}
\label{tab:hyperparams}
\begin{tabular}{ll}
\toprule
Parameter        & Value \\ \midrule
Batch size                                & 32 \\
Epochs                                    & 50 \\
Cross-validation folds                    & 10 \\
Learning rate                           & 0.001 \\
Weight decay                     & 5e-4 \\
Number of layers  & \{1, 2\} \\
Hidden size              & \{32, 64, 128\} \\
Walk length                               & \{4, 8, 16\} \\
\bottomrule
\end{tabular}
\end{table}

\end{document}